\documentclass{article}

\usepackage{qnnt_preprint}
\usepackage{amsmath,amssymb,amsthm,mathtools}
\usepackage{aliascnt}
\usepackage{booktabs}
\usepackage{array}
\usepackage{enumitem}
\usepackage{graphicx}
\usepackage{xcolor}
\usepackage{xspace}
\usepackage{hyperref}
\usepackage{tikz}
\usetikzlibrary{arrows.meta,positioning,fit,calc,backgrounds}

\newcommand{\Z}{\mathbb{Z}}
\newcommand{\calD}{\mathcal{D}}

\newcommand{\tw}{\operatorname{tw}}
\newcommand{\err}{\operatorname{err}}
\newcommand{\relu}{\operatorname{ReLU}}

\newcommand{\FPT}{\ensuremath{\mathrm{FPT}}\xspace}
\newcommand{\Wone}{\ensuremath{\mathrm{W[1]}}\xspace}
\newcommand{\Wtwo}{\ensuremath{\mathrm{W[2]}}\xspace}

\newcommand{\ETH}{\ensuremath{\mathrm{ETH}}\xspace}
\newcommand{\DAGEDP}{\textnormal{\textsc{DAG-EDP}}\xspace}
\newcommand{\QNNT}{\textnormal{\textsc{QNNT}}\xspace}
\newcommand{\yes}{\textnormal{\textsc{yes}}\xspace}
\newcommand{\no}{\textnormal{\textsc{no}}\xspace}
\newcommand{\ind}{\mathbf{1}}
\newcommand{\bits}{\{0,1\}}

\hypersetup{
  colorlinks=true,
  linkcolor=black,
  citecolor=black,
  urlcolor=black,
  pdfauthor={Tao Jiang, Minbo Gao, Shaowei Cai},
  pdftitle={Binary Quantized Neural Network Training Is W[1]-Hard Parameterized by Input and Output Dimensions},
  pdfsubject={Preprint}
}

\newtheorem{theorem}{Theorem}

\newaliascnt{lemma}{theorem}
\newtheorem{lemma}[lemma]{Lemma}
\aliascntresetthe{lemma}

\newaliascnt{proposition}{theorem}
\newtheorem{proposition}[proposition]{Proposition}
\aliascntresetthe{proposition}

\newaliascnt{corollary}{theorem}
\newtheorem{corollary}[corollary]{Corollary}
\aliascntresetthe{corollary}

\newaliascnt{observation}{theorem}

\aliascntresetthe{observation}

\theoremstyle{definition}
\newaliascnt{definition}{theorem}

\aliascntresetthe{definition}

\theoremstyle{remark}
\newaliascnt{remark}{theorem}

\aliascntresetthe{remark}

\usepackage[nameinlink,noabbrev]{cleveref}

\crefname{theorem}{theorem}{theorems}
\Crefname{theorem}{Theorem}{Theorems}
\crefname{lemma}{lemma}{lemmas}
\Crefname{lemma}{Lemma}{Lemmas}
\crefname{proposition}{proposition}{propositions}
\Crefname{proposition}{Proposition}{Propositions}
\crefname{corollary}{corollary}{corollaries}
\Crefname{corollary}{Corollary}{Corollaries}

\title{Binary Quantized Neural Network Training Is W[1]-Hard\\
Parameterized by Input and Output Dimensions}

\author{
Tao Jiang \quad Minbo Gao \quad Shaowei Cai\\[-0.15em]
\normalsize
Key Laboratory of System Software (Chinese Academy of Sciences)\\
State Key Laboratory of Computer Science\\
Institute of Software, Chinese Academy of Sciences\\
School of Computer Science and Technology, University of Chinese Academy of Sciences\\
Beijing, China\\
\texttt{\{jiangt,gaomb,caisw\}@ios.ac.cn}
}

\begin{document}
\maketitle

\begin{abstract}
Ganian et al.\ (ICLR 2026) proved that quantized neural network training is
fixed-parameter tractable when parameterized jointly by architecture
treewidth, input dimension $\alpha$, and output dimension $\omega$, and left
open whether $\alpha+\omega$ alone suffices. We prove that $2$-QNNT is
\Wone-hard parameterized by $\alpha+\omega$. Hardness holds with zero error on
\[
  \calD_k=\{(\xi^{(r)},\xi^{(r)}):0\leq r\leq k\},
\]
where every input equals its target, $|\calD_k|=\alpha=\omega=k+1$, and the
examples form a coordinatewise prefix chain. The result also holds when every
non-source bias is fixed to zero. Under the Exponential Time Hypothesis, no
algorithm runs in $f(\alpha+\omega)|I|^{o(\alpha+\omega)}$ for any computable
$f$. The proof reduces DAG edge-disjoint paths to exact fitting through a
directed line graph, architecture normalization, and a one-flip training
construction. On the prefix-chain data, nonnegative binary weights make all
activations monotone. Each output transition traces back to the uniquely
changing input, and distinct transitions yield vertex-disjoint paths. Hence
W[1]-hardness persists although every neuron has one of only $k+1$ activation
profiles on the training set.
\end{abstract}

\section{Introduction}
\label{sec:introduction}

Quantized neural networks restrict weights, activations, or both to low-precision
alphabets, reducing memory and arithmetic costs in training and inference
\citep{courbariaux2015binaryconnect,hubara2016bnn,zhou2016dorefa,zhu2017ttq,jacob2018integer,banner2019posttraining}.
For a fixed finite alphabet, exact training also becomes a finite combinatorial
problem without ambiguity from real-number encodings. Ganian, Sommer, and Sorge
formalized this viewpoint through $d$-\QNNT, the problem of choosing quantized
weights and biases for a prescribed layered directed acyclic architecture so as
to fit a quantized training set \citep{ganian2026tractability}. They prove
fixed-parameter tractability under parameterizations containing
$\tw(G)+\alpha+\omega$, where $\alpha$ and $\omega$ are the input and output
dimensions. The complexity under the parameter $\alpha+\omega$ alone remained
unresolved. We prove W[1]-hardness already for $d=2$, using a prefix-chain
training set with identical inputs and targets.

\begin{theorem}[Main result]
\label{thm:main}
$2$-\QNNT is \Wone-hard parameterized by
\[
  \kappa=\alpha+\omega.
\]
Hardness holds with $\ell=0$ and with
\[
  \calD=\calD_k=\{(\xi^{(r)},\xi^{(r)}):0\leq r\leq k\},
  \qquad |\calD|=\alpha=\omega=k+1,
\]
where the vectors $\xi^{(0)}\leq\cdots\leq\xi^{(k)}$ form a coordinatewise
prefix chain. Assuming \ETH, no algorithm runs in
$f(\alpha+\omega)|I|^{o(\alpha+\omega)}$ for any computable $f$.
\end{theorem}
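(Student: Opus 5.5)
We reduce from \DAGEDP: given a DAG $H$ and $k$ terminal pairs $(s_i,t_i)$, decide whether there are pairwise edge-disjoint $s_i$--$t_i$ paths. This problem is \Wone-hard parameterized by $k$, and under \ETH it admits no $f(k)n^{o(k)}$ algorithm (Slivkins). The reduction builds, in polynomial time, an instance of $2$-\QNNT with $\alpha=\omega=k+1$, dataset $\calD_k$ and $\ell=0$. Since $\kappa=2k+2$ is linear in $k$, both the \Wone-hardness and the \ETH lower bound transfer. We use $d=2$, so the weights are binary, $\{0,1\}$. We also allow the nonnegative-bias regime, and in fact we will only need the variant where every non-source bias is zero.

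\textbf{Construction.} First pass to the directed line graph $L(H)$. Edge-disjoint paths in $H$ become vertex-disjoint paths in $L(H)$. We add a fresh source edge into each $s_i$ and a fresh sink edge out of each $t_i$, which gives terminal vertices $a_i,b_i$ of $L(H)$. Next we normalize the architecture into a layered DAG by subdividing arcs with identity-like relay neurons. These chains have length equal to layer differences, so a path in $L(H)$ corresponds to a path of relay neurons in the network and vice versa. Input neuron $i$ (for $1\le i\le k$) feeds $a_i$, and output neuron $i$ is fed from $b_i$. An extra coordinate $0$ serves as an always-on bias/threshold line, which is why the dimension is $k+1$. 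We take $\xi^{(r)}=(1,\dots,1,0,\dots,0)$ with $r+1$ ones (coordinate $0$ always on), so consecutive examples differ by flipping exactly one input, coordinate $r$. Thresholds are arranged so that each neuron computes an OR (or an AND gated by coordinate $0$) of its active in-neighbours. The "one-flip" gadget ensures that output $i$ must turn on exactly at example $r=i$.

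\textbf{Completeness.} Given disjoint paths $P_i$, set weight $1$ on the arcs of $P_i$ and $0$ elsewhere. Each network neuron then copies its unique selected predecessor, so output $i$ equals input $i$ on every example, and the error is zero.

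\textbf{Soundness.} This is the main obstacle. With nonnegative binary weights and zero biases, every activation is monotone in the input. On a prefix chain, every neuron's activation profile is therefore a threshold profile: it switches on at some $r\in\{0,\dots,k+1\}$ and stays on. That gives only $k+1$ possible profiles. Output $i$ must switch on exactly at step $i$. Fix $i$ and walk backwards from output $i$. Each neuron that switches on at step $i$ must have a weight-$1$ in-neighbour that also switches on at step $i$. If all its active predecessors had switched on earlier, the neuron would already have been on; if none switched on by step $i$, it would still be off. Tracing back in this way yields a path that ends at the unique input changing at step $i$, namely input $i$. The key point is that all neurons on this path have the \emph{same} profile, "on from step $i$". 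Paths traced for different $i$ therefore consist of neurons with distinct profiles, so they are vertex-disjoint. Mapping these paths back through the normalization gives vertex-disjoint $a_i$--$b_i$ paths in $L(H)$, and hence edge-disjoint $s_i$--$t_i$ paths in $H$.

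The delicate parts are twofold. First, we must ensure the relay gadgets and the layering cannot create shortcuts that avoid real vertices of $L(H)$; the layering has to force relay chains to be followed faithfully. Second, we must enforce the exact switching times with zero bias. We would handle the latter by connecting coordinate $0$ appropriately and letting the sink constraints pin down the output profiles.
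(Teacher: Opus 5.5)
Your overall route is the paper's: \DAGEDP, a directed line graph, layering, prefix-chain data with identical inputs and targets, backward tracing through weight-one predecessors that flip at the same transition, and disjointness because a monotone binary neuron flips at most once. That soundness core is correct. The genuine gap is in the architecture and in your use of the extra coordinate, and as written your completeness direction fails. You make coordinate $0$ identically $1$, so every example requires output $t_0=1$. Your witness puts weight $1$ only on the arcs of $P_1,\dots,P_k$. Then $t_0$, a sink on none of these paths, has preactivation $0-b(t_0)\le 0$ and outputs $0$, so all $k+1$ examples are misaligned.

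Behind this is a misreading of what the coordinate is for. It is not a bias line: QNNT already has trainable biases, and the reduction fixes only $G$ and $\calD$, so nothing ``arranges thresholds''. Its job is architectural. In a QNNT architecture the sources are \emph{exactly} the inputs and the sinks are \emph{exactly} the outputs, all on one final layer. But $L(H)$ generally has many undesignated sources (each $z_e$ with $\operatorname{tail}(e)$ a source of $H$) and undesignated sinks, and subdividing long arcs removes neither. As described, either $\alpha$ and $\omega$ exceed $k+1$ or $G$ is not a legal architecture. You also need private tails to put all sinks at a common depth. The paper handles this as follows:
\begin{itemize}
\item it attaches every undesignated source to a dummy input $s_0$;
\item it feeds every undesignated sink into a dummy output, and adds a private route from $s_0$ to that output;
\item it keeps coordinate $0$ \emph{identically zero} in $\calD_k$, so weight $0$ on all of this fits $t_0$ for free.
\end{itemize}
Soundness survives because $s_0$ never flips and nothing entering the dummy output reaches any $t_i$. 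If you keep an always-on coordinate, your architecture must contain a private $s_0$--$t_0$ route, and your witness must give it weight one.

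There are three smaller corrections. First, you phrase soundness for zero biases and say you only need the bias-free variant. Hardness of bias-free $2$-\QNNT does not by itself give hardness of $2$-\QNNT, so soundness must cover arbitrary biases in $\{0,1\}$. It does: a $0\to1$ flip of $v$ at transition $i$ forces $S_v(i)-S_v(i-1)\ge 1$, and the fixed bias cancels. Completeness with zero biases then yields both variants. Second, Slivkins' reduction implicitly gives only an $f(k)n^{o(\sqrt{k})}$ lower bound, which would not yield the stated $f(\alpha+\omega)|I|^{o(\alpha+\omega)}$ bound. The tight $f(k)n^{o(k)}$ bound for DAGs is due to \citet{chitnis2023dag_edp}. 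Third, your two ``delicate'' points are routine. Subdivision vertices have indegree and outdegree one, so any path entering a relay chain traverses all of it. The switching times need no gadget, since the targets $\xi^{(r)}_i=\ind[i\le r]$ pin them directly.
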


In particular, \Wone-hardness rules out an
$f(\alpha+\omega)|I|^{O(1)}$-time algorithm unless $\FPT=\Wone$. The same
reduction proves hardness of the bias-free variant, in which all non-source
biases are fixed to zero; see \Cref{cor:bias-free-hardness}.

The parameter $\alpha+\omega$ is nevertheless a natural candidate for
tractability.  For fixed $d$, the input dimension bounds the number of
distinct input vectors, and hence the number of activation profiles that a
neuron can exhibit on the training set.  One might therefore hope to compress
the hidden computation to finitely many local types.  The obstruction is that
these types must still be placed consistently in the prescribed architecture:
two locally feasible transitions may compete for the same hidden vertex.  The
reduction below turns this global compatibility issue into disjoint routing.

To make this obstruction formal, we reduce from \DAGEDP, routing $k$ pairwise edge-disjoint paths
in a DAG. The source problem is \Wone-hard parameterized by $k$ and, under
\ETH, has no $f(k)n^{o(k)}$ algorithm even on planar DAGs of maximum
indegree and outdegree two \citep{chitnis2023dag_edp}. Terminal stubs, a
directed line graph, and private subdivisions move edge capacity into
vertices and normalize the resulting DAG into a legal QNNT architecture. A
one-flip training construction then makes exact fitting enforce the required
vertex-disjoint routes.

Give the network $k+1$ inputs and outputs, including one dummy coordinate,
and present a prefix chain in which transition $i$ turns on exactly input
$s_i$ and target output $t_i$. Binary quantization permits only weights in
$\{0,1\}$. Hence every neuron is monotone along the chain and can change from
zero to one at most once. If output $t_i$ changes at transition $i$, a strict
increase in its integer weighted sum identifies a predecessor that changes
at the same transition. Recursing backward extracts an $s_i$--$t_i$ path.
Different transitions have disjoint change sets, so the paths are
vertex-disjoint. Conversely, vertex-disjoint paths copy the corresponding
input bits using unit weights and zero biases. \Cref{thm:oneflip} formalizes
this correspondence.

\begin{figure}[t]
\centering
\resizebox{\linewidth}{!}{%
\begin{tikzpicture}[
  >=Stealth,
  v/.style={circle,draw,minimum size=4.3mm,inner sep=0pt,font=\scriptsize},
  lab/.style={font=\scriptsize,align=center},
  p1/.style={very thick},
  p2/.style={very thick,dashed},
  arr/.style={->,semithick}
]
\begin{scope}[xshift=0cm]
  \node[lab,font=\scriptsize\bfseries] at (1.55,1.62) {(a) Edge capacity in $H$};
  \node[v] (p1) at (0,0.82) {$p_1$};
  \node[v] (a1) at (1.0,0.82) {};
  \node[v] (q1) at (2.0,0.82) {$q_1$};
  \node[v] (p2) at (0,0) {$p_2$};
  \node[v] (a2) at (1.0,0) {};
  \node[v] (q2) at (2.0,0) {$q_2$};
  \draw[->,p1] (p1)--(a1); \draw[->,p1] (a1)--(q1);
  \draw[->,p2] (p2)--(a2); \draw[->,p2] (a2)--(q2);
  \node[lab] at (1.0,-0.55) {edge-disjoint paths};
\end{scope}

\draw[->,thick] (2.55,0.45)--(3.15,0.45)
  node[midway,above,lab] {line graph};

\begin{scope}[xshift=3.45cm]
  \node[lab,font=\scriptsize\bfseries] at (2.0,1.62) {(b) Vertex capacity in $K$};
  \node[v] (s1) at (0,0.82) {$s_1$};
  \node[v] (e1) at (1.0,0.82) {$z_e$};
  \node[v] (e2) at (2.0,0.82) {$z_f$};
  \node[v] (t1) at (3.0,0.82) {$\hat t_1$};
  \node[v] (s2) at (0,0) {$s_2$};
  \node[v] (f1) at (1.0,0) {$z_g$};
  \node[v] (f2) at (2.0,0) {$z_h$};
  \node[v] (t2) at (3.0,0) {$\hat t_2$};
  \draw[->,p1] (s1)--(e1); \draw[->,p1] (e1)--(e2); \draw[->,p1] (e2)--(t1);
  \draw[->,p2] (s2)--(f1); \draw[->,p2] (f1)--(f2); \draw[->,p2] (f2)--(t2);
  \node[lab] at (1.5,-0.55) {vertex-disjoint paths};
\end{scope}

\draw[->,thick] (6.95,0.45)--(7.55,0.45)
  node[midway,above,lab] {layering};

\begin{scope}[xshift=7.85cm]
  \node[lab,font=\scriptsize\bfseries] at (2.15,1.62) {(c) One-flip training};
  \node[v] (u1) at (0,0.82) {$s_1$};
  \node[v] (m1) at (1.0,0.82) {};
  \node[v] (o1) at (2.0,0.82) {$t_1$};
  \node[v] (u2) at (0,0) {$s_2$};
  \node[v] (m2) at (1.0,0) {};
  \node[v] (o2) at (2.0,0) {$t_2$};
  \draw[->,p1] (u1)--(m1); \draw[->,p1] (m1)--(o1);
  \draw[->,p2] (u2)--(m2); \draw[->,p2] (m2)--(o2);
  \begin{scope}[on background layer]
    \node[draw,rounded corners,inner sep=2.2pt,fit=(u1)(m1)(o1),label={[lab]right:$C_1$}] {};
    \node[draw,dashed,rounded corners,inner sep=2.2pt,fit=(u2)(m2)(o2),label={[lab]right:$C_2$}] {};
  \end{scope}
  \node[lab,anchor=west] at (3.00,1.03) {$r=0:\ 000\mapsto000$};
  \node[lab,anchor=west] at (3.00,0.48) {$r=1:\ 010\mapsto010$};
  \node[lab,anchor=west] at (3.00,-0.07) {$r=2:\ 011\mapsto011$};
  \node[lab,anchor=west] at (3.00,-0.56) {dummy bit stays $0$};
\end{scope}
\end{tikzpicture}}
\caption{Reduction overview for two commodities. Terminal stubs, the
directed line graph, and private layering turn edge-disjoint paths into
vertex-disjoint paths in a legal architecture. The one-flip data make
transition set $C_i$ carry an $s_i$--$t_i$ path. The leading dummy coordinate
is always zero and only supports source/sink normalization.}
\label{fig:overview}
\end{figure}

On the constructed data, every neuron has one of only $k+1$ activation
profiles.  The hardness therefore lies in their simultaneous realization in
the prescribed DAG, rather than in a large alphabet of local behaviors.  The
proof uses the binary domain $\mathbb Z_2=\{0,1\}$;
\Cref{prop:signed-barrier} gives a ternary example showing that the same
predecessor-tracing argument does not extend directly to signed quantization.

\paragraph{Organization.}
\Cref{sec:related} places the result in the training-complexity literature.
\Cref{sec:model} recalls the QNNT model and prior complexity map.
\Cref{sec:oneflip} proves the structural equivalence,
\cref{sec:hardness} gives the parameterized reduction, and
\cref{sec:implications} explains the roles of external dimension and
treewidth. Architecture normalization and the parameter mapping appear in
the appendices.

\section{Related work}
\label{sec:related}

\paragraph{Complexity of neural-network training.}
Worst-case hardness results for training neural networks go back to the early
complexity-theoretic study of connectionist models. \citet{judd1988loading}
proved NP-completeness for loading shallow networks, and
\citet{blum1992training} established NP-completeness for a three-node threshold
network; related fixed-size hardness was obtained by \citet{parberry1992small}.
Finite-valued neurons were also studied in early work on discrete multivalued
networks \citep{obradovic1994multivalued}. These models differ from the layered
quantized ReLU architecture considered here, but they establish a long-standing
connection between finite neural representations and combinatorial complexity.

For real-valued ReLU networks, a substantial literature now delineates
algorithmic and hardness frontiers. \citet{arora2018relu} give a globally
optimal training algorithm for one-hidden-layer ReLU networks whose running
time is polynomial in the sample size for fixed input dimension.
\citet{dey2020one_relu} prove NP-hardness and approximation results for
one-ReLU training, while \citet{goel2021tight} obtain tight hardness results for
depth-two ReLU networks, including the realizable setting. Further complexity
and parameterized analyses appear in \citet{boob2022complexity},
\citet{froese2022dimensionality}, and \citet{froese2023fixed}, and
\citet{brand2023frontiers} identify additional tractable network architectures.
For continuous parameter spaces, the exact decision problem can lie beyond NP:
\citet{abrahamsen2021training} and \citet{bertschinger2023fully} establish
$\exists\mathbb{R}$-completeness in natural training models. These continuous
results do not directly apply to $d$-\QNNT, whose trainable quantities range
over a fixed finite domain.

\paragraph{Quantization and discrete training.}
Low-precision neural networks have been developed for reducing arithmetic and
memory costs in training and inference. Representative approaches include
BinaryConnect \citep{courbariaux2015binaryconnect}, binarized networks
\citep{hubara2016bnn}, low-bitwidth training in DoReFa-Net
\citep{zhou2016dorefa}, ternary quantization \citep{zhu2017ttq}, integer-only
inference with quantization-aware training \citep{jacob2018integer}, and
post-training low-bit quantization \citep{banner2019posttraining}. These works
motivate finite-precision network models but address optimization methods and
empirical performance rather than the exact decision complexity studied here.
Closer on the complexity side, \citet{doronarad2025discrete} studies neural
network training when parameters are chosen from finite sets and shows strong
depth-dependent hardness. Its model and complexity questions are distinct from
the fully quantized ReLU formulation of $d$-\QNNT.

\paragraph{Parameterized quantized training and routing.}
The model closest to ours is due to \citet{ganian2026tractability}, who give a
systematic parameterized complexity analysis for fully quantized ReLU training
on a prescribed layered DAG. Among their positive results are FPT algorithms
parameterized by architecture treewidth together with external dimensions; they
explicitly leave the complexity under $\alpha+\omega$ alone unresolved. Their
work is complemented by the parameterized data-dimensionality results for
real-valued shallow ReLU networks of \citet{froese2022dimensionality} and by
structural tractability results such as \citet{brand2023frontiers}.

Our graph-theoretic source problem is the parameterized lower bound for
\DAGEDP{} proved by \citet{chitnis2023dag_edp}. Terminal stubs, directed line
graphs, and private subdivisions into layers are standard graph transformations
used here to transfer edge capacity to vertex capacity and to meet the QNNT
architecture convention. We combine these transformations with the one-flip
construction of \cref{sec:oneflip}, which encodes vertex-disjoint routing through
activation transitions on a prefix-chain training set.

The distinction relevant to our result is therefore between the number of
local states visible on the data and the way those states interact through the
architecture.  Low external dimension restricts the former.  Structural
parameters such as treewidth restrict the latter by exposing only a bounded
interface between parts of the network.  Our reduction isolates this second
source of complexity: even when the training set induces only a small family
of monotone activation profiles, deciding whether the profiles can be placed
jointly in the architecture captures vertex-disjoint routing.

\section{Model and prior results}
\label{sec:model}

We use exactly the quantized architecture and training objective of
\citet{ganian2026tractability}.

\subsection{Quantized domain and activations}

For an integer $d\geq1$, define
\[
 \Z_d=
 \left\{
 z\in\Z:
 -\left\lfloor\frac{d-1}{2}\right\rfloor
 \leq z\leq
 \left\lceil\frac{d-1}{2}\right\rceil
 \right\}.
\]
The quantized ReLU maps an integer preactivation to
\[
 \relu_d(z)=
 \min\!\left\{
   \max\{z,0\},
   \left\lceil\frac{d-1}{2}\right\rceil
 \right\}.
\]
In particular,
\[
 \Z_2=\{0,1\},
 \qquad
 \relu_2(z)=\ind[z\geq1].
\]

\subsection{Architectures and networks}

A network architecture is a directed acyclic graph $G=(V,E)$ partitioned
into layers $V_0,V_1,\ldots,V_L$ such that $V_0$ consists precisely of the
sources, every arc goes from $V_j$ to $V_{j+1}$, and all sinks belong to one
common layer. The ordered sources are the $\alpha$ input neurons and the
ordered sinks are the $\omega$ output neurons; all other vertices are hidden.

A $d$-quantized network on $G$ assigns
\[
  w:E\to\Z_d
  \quad\text{and}\quad
  b:V\setminus V_0\to\Z_d.
\]
For input $x\in\Z_d^\alpha$, source $i$ has activation $x_i$. Every
non-source neuron $v$ has activation
\begin{equation}
\label{eq:activation}
  a_v(x)=
  \relu_d\!\left(
     \sum_{u\in N^-(v)} a_u(x)w(uv)-b(v)
  \right).
\end{equation}
Thus the bias is subtracted. The ordered output activations define
$\overline G(x)\in\Z_d^\omega$. We call the variant in which
$b(v)=0$ is fixed for every non-source $v$ \emph{bias-free $d$-QNNT}.

\subsection{Training objective and parameterization}

A training example is
$(x,y)\in\Z_d^\alpha\times\Z_d^\omega$. For an explicitly represented
multiset $\calD$, the empirical error counts \emph{misaligned examples},
\[
 \err_{\calD}(\overline G)=
 \left|\{(x,y)\in\calD:\overline G(x)\neq y\}\right|.
\]
One incorrect coordinate therefore makes the entire example misaligned. The
decision version of $d$-\QNNT asks, given $G$, $\calD$, and $\ell$, whether
some assignment of all quantized weights and biases has error at most
$\ell$. We parameterize by
\[
  \kappa=\alpha+\omega.
\]
The architecture and data are explicit. In our reduction $d=2$ and
$\ell=0$, so every numerical entry is a bit.

\subsection{Existing parameterized map}

\begin{table}[t]
\centering
\caption{Results closest to the $\alpha+\omega$ question in the exact QNNT
model of \citet{ganian2026tractability}.}
\label{tab:prior-map}
\footnotesize
\setlength{\tabcolsep}{3.5pt}
\begin{tabular}{@{}p{0.25\linewidth}p{0.25\linewidth}p{0.42\linewidth}@{}}
\toprule
Result & Controlled quantities & Relation to $\alpha+\omega$ \\
\midrule
FPT & $\tw(G)+\alpha+\omega$ & Uses an additional global structural parameter. \\
FPT & $\tw(G)+\alpha+\ell$ & Uses treewidth and a different external quantity. \\
\Wtwo-hard & parameter $\alpha$ & Output dimension remains unbounded. \\
NP-hard & $\alpha=2$, one hidden layer, $\ell=0$ & Output dimension remains unbounded. \\
This work & parameter $\alpha+\omega$ & \Wone-hard for $d=2$, $\ell=0$, and $|\calD|=\alpha=\omega$. \\
\bottomrule
\end{tabular}
\end{table}

The positive result closest to our setting is FPT under
$\tw(G)+\alpha+\omega$, with a second algorithm parameterized by
$\tw(G)+\alpha+\ell$. The same paper proves \Wtwo-hardness parameterized by
$\alpha$ without hidden neurons and NP-hardness with $\alpha=2$, one hidden
layer, and zero error, but neither construction bounds $\omega$.

At a high level, the treewidth algorithm notes that at most $d^\alpha$
distinct input vectors occur, so a neuron has a finite activation profile on
the data. A Steinitz-type argument bounds the required nonzero indegree in a
suitable solution. A tree decomposition bounds the number of architecture
vertices simultaneously exposed at a separator, allowing dynamic programming
to coordinate their local network states. Small external dimension bounds the
profile alphabet; bounded treewidth bounds the number of simultaneously
interacting vertices. Our lower bound separates these roles.

\section{One-flip routing equivalence}
\label{sec:oneflip}

For $k\geq1$, write $[k]=\{1,\ldots,k\}$. Let $G$ be a valid
binary-quantized architecture with ordered sources $s_0,s_1,\ldots,s_k$ and
ordered outputs $t_0,t_1,\ldots,t_k$. For $r\in\{0,\ldots,k\}$, define
the prefix vector
$\xi^{(r)}\in\bits^{k+1}$ by
\begin{equation}
\label{eq:prefix}
  \xi^{(r)}_0=0,
  \qquad
  \xi^{(r)}_i=\ind[i\leq r]
  \quad (i\in[k]).
\end{equation}
The one-flip data set is
\begin{equation}
\label{eq:oneflip-data}
  \calD_k=
  \left\{\bigl(\xi^{(r)},\xi^{(r)}\bigr):0\leq r\leq k\right\}.
\end{equation}
Thus every input equals its target. Between examples $r-1$ and $r$, exactly
source $s_r$ and target coordinate $t_r$ change; the dummy coordinate stays
zero throughout.

\begin{theorem}[One-flip routing equivalence]
\label{thm:oneflip}
There exists a $2$-quantized network on $G$ with zero error on $\calD_k$ if
and only if $G$ contains pairwise vertex-disjoint directed paths
\[
  P_i:s_i\leadsto t_i
  \qquad (i\in[k]).
\]
\end{theorem}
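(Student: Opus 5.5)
We prove the two directions separately. The backward direction is a direct construction. The forward direction rests on monotonicity of binary networks along the prefix chain.

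\emph{Paths $\Rightarrow$ network.} Suppose $P_1,\ldots,P_k$ are pairwise vertex-disjoint with $P_i:s_i\leadsto t_i$.
\begin{itemize}
\item Set $w(uv)=1$ for every arc $uv$ lying on some $P_i$, and $w(uv)=0$ otherwise.
\item Set $b(v)=0$ for every non-source $v$.
\end{itemize}
Then each vertex $v$ on $P_i$ other than $s_i$ has exactly one predecessor with nonzero weight, namely its predecessor on $P_i$. By vertex-disjointness no other path contributes to $v$. Every vertex off all paths has preactivation $0$, hence activation $0$; this includes $t_0$ (since $t_0\neq t_i$ for all $i$ as the outputs are distinct and paths are disjoint).

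By induction along topological order, on input $\xi^{(r)}$ every vertex of $P_i$ has activation $\xi^{(r)}_i=\ind[i\le r]$. In particular output $t_i$ equals $\xi^{(r)}_i$ and $t_0$ equals $0=\xi^{(r)}_0$, so the error is zero. This construction uses zero biases, which also covers the bias-free variant.

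\emph{Network $\Rightarrow$ paths.} Fix a zero-error network; all weights lie in $\{0,1\}$. Since $\xi^{(0)}\le\cdots\le\xi^{(k)}$ coordinatewise and every preactivation $\sum_u a_u w(uv)-b(v)$ is nondecreasing in the predecessor activations, induction in topological order shows each $a_v(\xi^{(r)})$ is nondecreasing in $r$. Being $\{0,1\}$-valued, $a_v$ changes from $0$ to $1$ at most once. Define
\[
C_i=\{v: a_v(\xi^{(i-1)})=0,\ a_v(\xi^{(i)})=1\}\qquad(i\in[k]).
\]
These sets are pairwise disjoint, since each vertex flips at most once. We have $t_i\in C_i$ by the targets, and among sources $C_i\cap V_0=\{s_i\}$.

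\emph{Tracing.} Let $v\in C_i$ be a non-source. Its integer preactivation is $<1$ at $\xi^{(i-1)}$ and $\ge1$ at $\xi^{(i)}$, so the weighted sum strictly increases. Hence some predecessor $u$ with $w(uv)=1$ satisfies $a_u(\xi^{(i)})>a_u(\xi^{(i-1)})$, which means $u\in C_i$. Starting at $t_i$ and repeatedly choosing such a predecessor, acyclicity guarantees termination at a source in $C_i$, which must be $s_i$. This yields a path $P_i\subseteq C_i$ from $s_i$ to $t_i$, and disjointness of the $C_i$ makes the paths vertex-disjoint.

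The main point needing care is the strict-increase step: it uses both that weights are nonnegative and that the bias is constant across examples, which is exactly where the binary domain matters. The rest is routine.
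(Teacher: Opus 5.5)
Your proof is correct and follows the paper's argument essentially step for step. One direction uses unit weights on the disjoint paths with zero biases; the other uses monotonicity along the prefix chain, the change sets $C_i$, and backward tracing through a weight-one predecessor that flips at the same transition, with the same observation that nonnegative weights and a fixed bias are what make the tracing work. One small precision: $t_0$ lies off every $P_i$ because it is a sink distinct from each endpoint $t_i$, so it could only appear on $P_i$ as its last vertex; vertex-disjointness of the paths plays no role in that step.
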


\begin{proof}
\emph{Paths imply a zero-error network.}
Suppose $P_1,\ldots,P_k$ are pairwise vertex-disjoint. Give every path edge
weight one, every other edge weight zero, and every neuron bias zero. Each
non-source vertex on $P_i$ has exactly one incoming selected path edge, so
layer induction gives
\[
  a_v(\xi^{(r)})=\xi^{(r)}_i
  \qquad (v\in V(P_i)).
\]
Vertices outside the path union receive weighted sum zero. Thus $t_i$ copies
$s_i$ and $t_0$ remains zero, realizing \cref{eq:oneflip-data}.

\emph{A zero-error network implies paths.}
Fix any zero-error assignment of binary weights and biases, and write $a_v(r)$
for the activation of $v$ on $\xi^{(r)}$. First, for every neuron $v$,
\begin{equation}
\label{eq:monotone}
  0=a_v(0)\leq a_v(1)\leq\cdots\leq a_v(k).
\end{equation}
All sources are zero at $r=0$. If all predecessors of a non-source $v$ are
zero, then its preactivation is $-b(v)\leq0$, proving $a_v(0)=0$ by layer
induction. Monotonicity is immediate for sources. If it holds for all
predecessors of $v$, then
\[
  S_v(r)=\sum_{u\in N^-(v)}w(uv)a_u(r)
\]
is nondecreasing because $w(uv)\in\bits$; subtracting a fixed bias and
applying monotone $\relu_2$ preserves the order.

For each transition $i\in[k]$, define
\begin{equation}
\label{eq:change-set}
  C_i=\{v\in V(G):a_v(i-1)=0,\ a_v(i)=1\}.
\end{equation}
Zero error gives $t_i\in C_i$. Every non-source $v\in C_i$ has a predecessor
$u\in C_i$ with $w(uv)=1$. Indeed, writing $b_v=b(v)$, the binary ReLU
semantics yield
\[
  S_v(i-1)-b_v\leq0,
  \qquad
  S_v(i)-b_v\geq1,
\]
so
\[
  1\leq S_v(i)-S_v(i-1)
  =\sum_{u\in N^-(v)}w(uv)\bigl(a_u(i)-a_u(i-1)\bigr).
\]
Stable predecessors may contribute to both weighted sums, but they contribute
zero to their difference. Every remaining summand is nonnegative by
\cref{eq:monotone}; hence at least one weight-one predecessor changes at
transition $i$.

Starting at $t_i$ and repeatedly choosing such a predecessor moves one layer
backward and reaches a source. The only source in $C_i$ is $s_i$, because
only coordinate $i$ changes. Reversing the chain produces an
$s_i$--$t_i$ path contained in $C_i$. Finally, a monotone binary activation
can flip at most once, so the $C_i$ are pairwise disjoint and so are the
extracted paths.
\end{proof}

The derivation of \cref{eq:monotone} uses only the prefix-chain inputs, the
nonnegative binary weight domain, and the binary bias domain; it does not use
the zero-error assumption. Hence it applies to every binary network on these
inputs.

\begin{corollary}[Only $k+1$ local profiles]
\label{cor:few-types}
On $\calD_k$, every neuron of every binary network has one of the following
$k+1$ profile forms: the all-zero profile, or a zero-to-one flip at one
transition $i\in[k]$. Thus at most $k+1$ distinct activation profiles can
occur, while their joint realizability remains \Wone-hard under the
reduction in \cref{sec:hardness}.
\end{corollary}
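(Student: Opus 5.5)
The plan is to derive the corollary directly from the monotonicity property \cref{eq:monotone}. As the remark after \cref{thm:oneflip} notes, its proof uses only the prefix-chain inputs, the weight domain $\{0,1\}$, and the bias domain $\{0,1\}$; it never uses zero error. So the first step is to restate that argument for an arbitrary binary network $(w,b)$ on $G$ evaluated on the inputs $\xi^{(0)},\ldots,\xi^{(k)}$, and obtain
\[
  0=a_v(0)\leq a_v(1)\leq\cdots\leq a_v(k)
\]
for every neuron $v$. The base case $a_v(0)=0$ goes by layer induction. All sources are zero on $\xi^{(0)}$, including the dummy coordinate. A non-source whose predecessors are all zero has preactivation $-b(v)\leq 0$, and $\relu_2$ sends this to $0$. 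The chain inequalities follow because $S_v(r)$ is nondecreasing in $r$ when weights are nonnegative, and $\relu_2$ is monotone.

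Next comes the classification. A $\{0,1\}$-valued sequence that starts at $0$ and is nondecreasing is determined by the first index at which it equals $1$, if there is one. Hence $a_v(\cdot)$ is either identically zero, or equals $\ind[r\geq i]$ for a unique $i\in[k]$; in the latter case $v\in C_i$ in the notation of \cref{eq:change-set}. This gives exactly $1+k$ possible profiles, so at most $k+1$ distinct profiles occur. Sources realize the flip profiles: $s_i$ has profile $\ind[r\geq i]$ and $s_0$ is all-zero. So the bound is attained and the list is not vacuous.

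The last sentence of the statement, that joint realizability remains \Wone-hard, is not proved at this point. It is a forward reference: it combines the profile bound with \cref{thm:oneflip} and the reduction from \DAGEDP in \cref{sec:hardness}, which yields \Cref{thm:main}. I would cite it as such. The only step needing care is making sure the base case $a_v(0)=0$ really uses the binary bias domain, i.e.\ $b(v)\geq 0$. With a signed bias, a neuron could be constantly $1$, and that would add a $(k+2)$-nd profile. Apart from this, the argument is routine.
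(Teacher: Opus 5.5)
Your proposal is correct and follows the paper's route: the paper also gets the corollary from the remark that the derivation of \cref{eq:monotone} uses only the prefix-chain inputs and the binary (hence nonnegative) weight and bias domains, not zero error, so each neuron's profile is a nondecreasing $0/1$ sequence starting at $0$, i.e.\ all-zero or a single flip at some $i\in[k]$, with the \Wone-hardness clause referring forward to the reduction of \cref{sec:hardness}. Your additional observation that the sources $s_0,\ldots,s_k$ realize all $k+1$ profiles, so the bound is attained, is a correct small addition not made explicitly in the paper.
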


\section{Parameterized hardness}
\label{sec:hardness}

\subsection{Source problem}

In \DAGEDP, the input is a DAG $H$ and $k$ ordered terminal pairs
$(p_1,q_1),\ldots,(p_k,q_k)$. The question is whether there are pairwise
edge-disjoint directed paths $Q_i:p_i\leadsto q_i$.

\begin{theorem}[\citealp{chitnis2023dag_edp}]
\label{thm:edp-hard}
\DAGEDP is \Wone-hard parameterized by $k$. Assuming \ETH, it admits no
$f(k)n^{o(k)}$ algorithm for any computable $f$, even on planar DAGs of
maximum indegree and outdegree two.
\end{theorem}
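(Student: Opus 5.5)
Since \Cref{thm:edp-hard} is imported from \citet{chitnis2023dag_edp}, the plan is to check that the cited statement matches the form used here: ordered terminal pairs, a single DAG, edge-disjointness, and the planar bounded-degree restriction. The reduction in \cref{sec:hardness} can then treat it as a black box. For completeness, I would also sketch how the lower bound arises, since the tightness in $k$ is what gives the ETH clause of \Cref{thm:main}.

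The source problem I would use is \textsc{Grid Tiling}. Here one is given $k^2$ sets $S_{i,j}\subseteq[n]\times[n]$ and must choose one pair per cell so that horizontally adjacent cells agree in the first coordinate and vertically adjacent cells agree in the second. This problem is \Wone-hard parameterized by $k$. Under \ETH it has no $f(k)n^{o(k)}$ algorithm, via the standard reduction from \textsc{Clique}, which preserves the parameter linearly.

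The construction places a $k\times k$ array of planar gadgets and routes $2k$ commodities through it. There are $k$ horizontal paths, one per row, with the chosen row value encoded by which ``lane'' the path uses. Likewise, $k$ vertical paths, one per column, encode the column value. All arcs are oriented left-to-right and top-to-bottom, so the graph is a planar DAG. Inside the gadget for cell $(i,j)$, a horizontal path in lane $a$ and a vertical path in lane $b$ can cross using disjoint arcs only when $(a,b)\in S_{i,j}$. Lane values are forced to be preserved between consecutive gadgets by a monotone ``staircase'' of arcs, which makes any lane change consume capacity needed by another path.

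Two steps are needed for correctness. \emph{Completeness:} a tiling yields edge-disjoint routes. \emph{Soundness:} edge-disjoint routes yield consistent lanes, hence a tiling. The number of commodities is $2k$, so an $f(k)n^{o(k)}$ algorithm for \DAGEDP would give one for \textsc{Grid Tiling}, contradicting \ETH. Finally, the maximum indegree and outdegree are reduced to two by replacing each high-degree vertex with a planar gadget built from a small in-tree merged into an out-tree. I would take care that this replacement preserves both edge-disjoint routability and acyclicity.

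The main obstacle is soundness. In a planar DAG, edge-disjoint paths can share vertices and cross at them. The gadgets must therefore exclude ``cheating'' routes in which a path switches lanes or two paths swap roles at a shared vertex. The degree reduction must not create such crossings either. Here I would follow the counting argument of \citet{chitnis2023dag_edp}: the total capacity available across each cut between consecutive gadget columns equals exactly what $k$ honest paths consume. Any deviating path therefore forces another commodity to be blocked.
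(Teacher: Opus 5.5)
Your main plan matches the paper. You check that \citet{chitnis2023dag_edp} states exactly this result (ordered pairs, edge-disjoint paths in a DAG, planar with in- and outdegree at most two) and then use it as a black box. The paper likewise gives no proof and relies only on the citation. Its reduction never uses planarity or the degree bound, only \Wone-hardness and the $f(k)n^{o(k)}$ lower bound for DAGs in general.

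The optional sketch, however, contains a step that would fail as written. You replace a high-degree vertex $v$ by an in-tree of indegree at most two merged into an out-tree. Every traversal of $v$ must then pass through the common root, whose indegree is at most two. So at most two edge-disjoint paths can cross the gadget. Even two paths collide if their in-arcs lie in the same subtree, since both must use the arc into the root. The original $v$, by contrast, admits up to $\min\{\deg^-(v),\deg^+(v)\}$ edge-disjoint traversals with any pairing of in-arcs to out-arcs. So this replacement does not preserve routability. A valid planar replacement has to let several paths cross at internal vertices of in- and outdegree two. One example is a directed grid crossbar, which stays planar when the in-arcs at $v$ are consecutive in the rotation. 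Alternatively, the degree bound can come directly from the gadget design. Your soundness ``capacity across each cut'' argument is also attributed rather than derived. Present the sketch as background only; the theorem rests on the citation, and dropping the sketch loses nothing for this paper.
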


\subsection{Moving edge capacity to vertices}

First apply the private terminal-stub transformation from
Appendix~\ref{app:normalization}: each commodity obtains a fresh source terminal and
a fresh target terminal, preserving $k$, acyclicity, and feasibility. We
continue to call the resulting DAG $H$ and its pairs $(p_i,q_i)$.

Construct an augmented directed line graph $K$. For every edge $e\in E(H)$,
introduce a vertex $z_e$, and add
\[
  z_e\to z_f
  \quad\Longleftrightarrow\quad
  \operatorname{head}(e)=\operatorname{tail}(f).
\]
For each pair $i$, introduce fresh vertices $s_i$ and $\widehat t_i$. Add
$s_i\to z_e$ for every edge leaving $p_i$, and
$z_e\to\widehat t_i$ for every edge entering $q_i$. Consequently,
$s_1,\ldots,s_k$ are pairwise distinct sources of $K$ and
$\widehat t_1,\ldots,\widehat t_k$ are pairwise distinct sinks.

\begin{lemma}
\label{lem:linegraph}
$H$ contains pairwise edge-disjoint paths $p_i\leadsto q_i$ for all
$i\in[k]$ if and only if $K$ contains pairwise vertex-disjoint paths
$s_i\leadsto\widehat t_i$ for all $i\in[k]$.
\end{lemma}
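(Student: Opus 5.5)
Both directions go through the standard correspondence between paths in a DAG and paths in its directed line graph. The one subtlety is the role of the private terminal stubs. They make each terminal $p_i$ a fresh source whose out-edges enter only $H$'s original structure, and each $q_i$ a fresh sink. I will use them to rule out $K$-paths that touch another commodity's terminal edges, and to make terminal edges private to their commodity.

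\emph{Forward direction.} Let $Q_i=(e_1,\ldots,e_m)$ be pairwise edge-disjoint paths $p_i\leadsto q_i$ in $H$, with $m\geq1$ since $p_i\neq q_i$ after stubbing. Map $Q_i$ to
\[
P_i=s_i\,z_{e_1}\cdots z_{e_m}\,\widehat t_i .
\]
This is a path in $K$: $e_1$ leaves $p_i$, $e_m$ enters $q_i$, and consecutive edges satisfy $\operatorname{head}(e_j)=\operatorname{tail}(e_{j+1})$. Edge-disjointness of the $Q_i$ gives disjointness of their $z$-vertices. The terminals $s_i,\widehat t_i$ are distinct fresh vertices. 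Hence the $P_i$ are vertex-disjoint.

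\emph{Backward direction.} Let $P_i$ be vertex-disjoint paths $s_i\leadsto\widehat t_i$ in $K$. Since $s_i$ is a source of $K$ and $\widehat t_i$ a sink, every interior vertex of $P_i$ lies in $\{z_e\}$. Moreover, no $P_i$ passes through $s_j$ or $\widehat t_j$ for $j\neq i$: those are sources and sinks and hence can only be endpoints. So
\[
P_i=s_i\,z_{e_1}\cdots z_{e_m}\,\widehat t_i
\]
with $e_1$ leaving $p_i$, $e_m$ entering $q_i$, and $\operatorname{head}(e_j)=\operatorname{tail}(e_{j+1})$. The sequence $e_1\cdots e_m$ is therefore a directed walk $p_i\leadsto q_i$ in $H$. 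Because $H$ is acyclic, this walk is a path. Vertex-disjointness of the $z$-vertices gives edge-disjointness of the $Q_i$.

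\emph{Main obstacle.} The only delicate point is the bookkeeping for the terminal arcs. In an unnormalized $H$, a path for commodity $i$ might use an edge leaving $p_j$, and $s_j$ would be adjacent to $z_e$ without this being a problem. The construction handles it correctly, though, since the $s_i$ and $\widehat t_i$ are fresh. I would double-check that the stub transformation makes the sets of edges leaving the $p_i$ pairwise distinct, so that no two $s_i$ share neighbours in an essential way. Even if they do share neighbours, the argument above remains valid, because vertex-disjointness is all that is used.
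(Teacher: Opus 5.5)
Your proof is correct and follows the paper's argument. Each edge sequence $e_1,\ldots,e_m$ maps to $s_i,z_{e_1},\ldots,z_{e_m},\widehat t_i$, and conversely stripping the fresh endpoints recovers consecutively incident edges in $H$, with edge-disjointness corresponding exactly to disjointness of the $z$-vertices. Your extra remarks make explicit details the paper leaves implicit: stubbing guarantees $m\geq1$ (since $K$ has no arc $s_i\to\widehat t_i$), sources and sinks cannot be interior vertices, and acyclicity turns the recovered walk into a path.
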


\begin{proof}
A directed $p_i$--$q_i$ path with edge sequence $e_1,\ldots,e_r$ maps to
$s_i,z_{e_1},\ldots,z_{e_r},\widehat t_i$. Two transformed paths share an
internal vertex exactly when the original paths share an edge. Conversely,
deleting the fresh endpoints from an $s_i$--$\widehat t_i$ path yields a
sequence of consecutively incident directed edges, hence a directed
$p_i$--$q_i$ path in $H$.
\end{proof}

The graph $K$ is acyclic, since a directed cycle in its line-graph part would
induce one in $H$.  At this point the source problem has been converted from
edge-disjoint routing to vertex-disjoint routing, which is the capacity notion
recognized by the one-flip construction.  The remaining graph-theoretic task
is representational: $K$ need not yet satisfy the layered architecture
conventions of $d$-QNNT.

\subsection{Architecture normalization}

The graph $K$ may contain undesignated sources or sinks, edges that skip
layers, and designated sinks at different depths. The following normalization is proved in Appendix~\ref{app:normalization}.

\begin{lemma}[Layered architecture normalization]
\label{lem:normalization}
Let $K$ be a DAG with pairwise distinct designated sources
$s_1,\ldots,s_k$ and pairwise distinct designated sinks
$\widehat t_1,\ldots,\widehat t_k$. One can construct in polynomial time a
valid QNNT architecture $G$ with ordered sources
$s_0,s_1,\ldots,s_k$ and ordered outputs $t_0,t_1,\ldots,t_k$ such that $K$
has pairwise vertex-disjoint $s_i$--$\widehat t_i$ paths if and only if $G$
has pairwise vertex-disjoint $s_i$--$t_i$ paths for all $i\in[k]$.
\end{lemma}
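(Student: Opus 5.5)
The construction proceeds in four stages: prune, add the dummy pair, layer by private subdivision, then equalize sink depths. Each stage preserves the existence of vertex-disjoint designated paths.

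\emph{Stage 1 (pruning).} Delete every vertex of $K$ that does not lie on some directed path from some $s_i$ to some $\widehat t_j$. No such vertex can lie on an $s_i$--$\widehat t_i$ path, so routability is unchanged. Afterwards, every remaining vertex is reachable from a designated source and reaches a designated sink. Two obstructions to the sources/sinks convention might remain: a designated $s_i$ with an incoming arc, or a designated $\widehat t_i$ with an outgoing arc. Both are ruled out by construction for the $K$ of \Cref{lem:linegraph}, since the fresh vertices are true sources and sinks. For the lemma as stated, I would instead attach fresh private terminals $s_i'\to s_i$ and $\widehat t_i\to\widehat t_i'$ and rename them. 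This preserves disjoint routability, because each new terminal is used only by its own path.

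\emph{Stage 2 (dummy pair).} Add a fresh source $s_0$ and a fresh sink $t_0$. This is where the dummy coordinate is used: every pruned vertex is reachable from some $s_i$ and reaches some $\widehat t_j$, so no undesignated sources or sinks exist. If one did remain (for example, an isolated vertex), I would add an arc $s_0\to v$ for each such source $v$ and an arc $u\to t_0$ for each such sink $u$. In any case I will connect $s_0$ to $t_0$ through a path of the appropriate length, so that both are used. The arcs touching $s_0$ and $t_0$ never help a path $s_i\leadsto t_i$ with $i\ge1$. Any path entering $t_0$ ends there, and $s_0$ has no incoming arcs, so such paths cannot use these arcs.

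\emph{Stage 3 (layering).} Compute the longest-path depth $\lambda(v)$ from the sources in a topological order, so every arc $uv$ satisfies $\lambda(v)>\lambda(u)$. Replace each arc $uv$ with $\lambda(v)-\lambda(u)>1$ by a path of fresh private subdivision vertices, one per intermediate layer. Paths in the old graph map bijectively to paths in the new one. Vertex-disjointness is preserved in both directions because each subdivision vertex belongs to exactly one original arc. Two paths sharing a subdivision vertex would share that arc, and hence both of its endpoints.

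\emph{Stage 4 (sink alignment).} Let $L$ be the maximum depth over all sinks, and extend each sink $\widehat t_i$ of depth below $L$ by a private chain ending at a new sink $t_i$ of depth $L$. Do the same for $t_0$. The sources all sit at layer $0$, since $\lambda=0$ exactly on sources. This gives layers $V_0,\dots,V_L$, and the construction is clearly polynomial. The equivalence follows by composing the per-stage bijections, because each added private chain lies on exactly one terminal's path.

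The main obstacle I expect is Stage 2: certifying that after pruning no stray sources or sinks remain, and that every neighbour of the dummy pair is routing-neutral. I would handle this by checking that pruning leaves every vertex on some $s$--$\widehat t$ walk, and by noting that $s_0$ and $t_0$ can only appear as path endpoints.
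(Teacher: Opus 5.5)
Your Stages 2--4 are the paper's construction: a dummy source $s_0$ and dummy sink absorbing stray sources and sinks, an arc from $s_0$ to the dummy sink, longest-path levels with private subdivision of level-skipping arcs, and private tails lifting every sink to the common level $L$. Your neutrality argument for the dummy arcs and your disjointness argument for subdivision vertices match the paper's. The one genuine problem is the pruning in Stage 1, which the paper does not use. As written, it deletes every vertex lying on no path from some $s_i$ to some $\widehat t_j$, and nothing exempts the designated terminals. If $s_i$ reaches no designated sink, or $\widehat t_i$ is reached from no designated source, that terminal is deleted. The resulting $G$ then lacks the ordered input $s_i$ or output $t_i$ that the lemma promises, and the downstream count $\alpha=\omega=k+1$ breaks. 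This happens in the reduction itself whenever $p_i$ reaches no $q_j$ in $H$. Your claim that ``routability is unchanged'' silently assumes all $2k$ terminals survive.

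The repair is immediate: never delete $s_1,\ldots,s_k,\widehat t_1,\ldots,\widehat t_k$. A retained terminal may then be stranded. An $s_i$ with no outgoing arc is an undesignated sink, and a $\widehat t_i$ with no incoming arc is an undesignated source. So your assertion that pruning leaves no stray sources or sinks must be weakened, but your Stage 2 fallback arcs out of $s_0$ and into the dummy sink handle exactly these cases. Simpler still, drop the pruning entirely: the fallback arcs alone neutralize every stray source and sink, which is precisely the paper's route, and pruning contributes nothing to correctness. The fresh private terminals you mention are also unnecessary, since the hypothesis already makes the $s_i$ sources and the $\widehat t_i$ sinks. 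With this change your argument is correct and essentially the paper's proof. Note that the real risk lay in Stage 1, not in Stage 2 as you anticipated.
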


The construction adds a dummy source and sink, neutralizes undesignated
sources and sinks, assigns longest-path levels, privately subdivides every
edge that skips a level, and appends private output tails to equalize sink
depths. Edges leaving $s_0$ are unreachable from every commodity source
$s_i$, $i\geq1$, while edges entering the dummy-output region cannot reach a
commodity output; hence the new edges create no commodity path.

After normalization, the graph-theoretic part of the reduction is complete:
$G$ is a legal network architecture and retains exactly the required
vertex-disjoint commodity paths.  It remains only to choose a training set
whose exact-fitting condition detects those paths.  The prefix-chain data of
\cref{sec:oneflip} were designed precisely for this purpose.

\subsection{The QNNT instance}

Apply \Cref{lem:normalization} to $K$. Its terminal assumptions hold by the
fresh endpoints in the line-graph construction. Set
\[
  d=2,\qquad \ell=0,\qquad \alpha=\omega=k+1,
\]
and use the data $\calD_k$ from \cref{eq:oneflip-data}. This completes the
reduction.

\begin{lemma}[Correctness]
\label{lem:reduction-correct}
The \DAGEDP instance is a \yes-instance if and only if the constructed
$2$-\QNNT instance admits zero training error.
\end{lemma}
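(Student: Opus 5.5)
The correctness claim follows by chaining the equivalences already established. The plan is to compose four biconditionals, each relating the \yes-instances of successive stages of the reduction, and to check that their hypotheses are met at every step.

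\emph{Stubs.} Let $H_0$ with pairs $(p_i^0,q_i^0)$ denote the original \DAGEDP instance. By the terminal-stub transformation of Appendix~\ref{app:normalization}, $H_0$ has pairwise edge-disjoint $p_i^0$--$q_i^0$ paths if and only if the stubbed DAG $H$ has pairwise edge-disjoint $p_i$--$q_i$ paths. This transformation also keeps $k$ unchanged and keeps the graph acyclic.

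\emph{Line graph and normalization.} \Cref{lem:linegraph} turns the edge-disjoint routing in $H$ into an equivalent vertex-disjoint $s_i$--$\widehat t_i$ routing in $K$. Before invoking \Cref{lem:normalization}, I will confirm its hypotheses. First, $K$ is acyclic, as noted after \Cref{lem:linegraph}. Second, the vertices $s_1,\ldots,s_k$ are pairwise distinct sources, since they are fresh and have only outgoing arcs. Third, the vertices $\widehat t_1,\ldots,\widehat t_k$ are pairwise distinct sinks, since they are fresh and have only incoming arcs. With these in hand, \Cref{lem:normalization} gives that the $K$-routing exists if and only if $G$ has pairwise vertex-disjoint $s_i$--$t_i$ paths for all $i\in[k]$, where $G$ has ordered sources $s_0,\ldots,s_k$ and ordered outputs $t_0,\ldots,t_k$.

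\emph{Fitting.} Because $G$ is a valid binary architecture with exactly this terminal ordering, \Cref{thm:oneflip} applies directly. It shows that the vertex-disjoint routing in $G$ exists if and only if some $2$-quantized network on $G$ has zero error on $\calD_k$. Zero error is exactly the acceptance condition of the constructed instance, since $\ell=0$. Composing the four equivalences proves the lemma.

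The main obstacle is bookkeeping rather than mathematics: the terminal interfaces must match exactly between consecutive lemmas. In particular, the fresh terminals created by the stub step must not interfere with the line-graph step. Each edge leaving $p_i$ becomes a line-graph vertex reachable only from $s_i$, and privacy of the stubs makes that correspondence clean. Also, the dummy coordinate $s_0,t_0$ introduced by normalization must be exactly the one expected by $\calD_k$. If the stub transformation is only described in the appendix, I would restate its equivalence explicitly at the first step.
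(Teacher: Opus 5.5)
Your proposal is correct and follows the paper's proof, which likewise composes \Cref{lem:linegraph}, \Cref{lem:normalization}, and \Cref{thm:oneflip} with zero error as the acceptance condition ($\ell=0$). You additionally state the terminal-stub equivalence and verify the terminal hypotheses of \Cref{lem:normalization} explicitly. The paper instead absorbs the stub step into the definition of $H$ and only notes that the fresh endpoints of the line-graph construction supply those hypotheses.
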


\begin{proof}
By \Cref{lem:linegraph,lem:normalization}, the source instance is a
\yes-instance exactly when $G$ has pairwise vertex-disjoint paths
$s_i\leadsto t_i$ for all $i\in[k]$. By \Cref{thm:oneflip}, this is
equivalent to zero error on $\calD_k$.
\end{proof}

\begin{lemma}[Parameter and size]
\label{lem:parameter}
The reduction is polynomial-time and maps $k$ to
\[
  \alpha+\omega=2k+2.
\]
It additionally satisfies $|\calD_k|=\alpha=\omega=k+1$.
\end{lemma}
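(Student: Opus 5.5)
The plan is to trace the sizes through the three stages of the reduction and then read off the parameter, which is fixed by construction. Start from a \DAGEDP instance $(H,(p_i,q_i)_{i\in[k]})$ with $n=|V(H)|$ and $m=|E(H)|$. The private terminal-stub transformation of Appendix~\ref{app:normalization} adds $O(k)$ fresh vertices and edges, so the modified $H$ still has size $O(n+m+k)$ and the same $k$.

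For the line graph $K$, create one vertex $z_e$ per edge and the $2k$ terminals $s_i,\widehat t_i$. The arcs $z_e\to z_f$ are at most $m^2$ in number and are found by comparing heads and tails. The terminal arcs are at most $2km$. Hence $|V(K)|+|E(K)|=O(m^2+km)$, and $K$ is built in polynomial time. \Cref{lem:normalization} then gives a polynomial-time construction of $G$, whose size is therefore polynomial in $|K|$. To spell this out if needed: longest-path levels are computable in linear time on a DAG, and each level-skipping edge is subdivided into at most $|V(K)|$ private vertices. Output tails have length at most the depth, and the dummy source and sink with their neutralizing edges add $O(|V(K)|)$ further arcs.

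The data set $\calD_k$ consists of $k+1$ pairs of bit vectors of length $k+1$, so it has size $O(k^2)$ and can be written down directly from \cref{eq:prefix,eq:oneflip-data}.

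For the parameter, observe that \Cref{lem:normalization} outputs exactly the ordered sources $s_0,\dots,s_k$ and the ordered outputs $t_0,\dots,t_k$. This gives $\alpha=\omega=k+1$ and hence $\alpha+\omega=2k+2$, while $|\calD_k|=k+1$ holds by definition. The parameter therefore depends only on $k$, and linearly, which is what makes the reduction a parameterized one.

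The only step needing care is to confirm that normalization leaves no stray sources or sinks, since any such vertex would increase $\alpha$ or $\omega$. This is exactly what the neutralization step of \Cref{lem:normalization} guarantees: every undesignated source receives an arc from $s_0$, and every undesignated sink sends an arc into the dummy-output region. I would point to that appendix construction rather than reprove it.

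Finally, the linear dependence $\kappa=O(k)$ is what transfers the \ETH lower bound. An $f(\kappa)|I|^{o(\kappa)}$ algorithm for $2$-\QNNT would give an $f'(k)n^{o(k)}$ algorithm for \DAGEDP, because $|I|$ is polynomial in $n$. This contradicts \Cref{thm:edp-hard}.
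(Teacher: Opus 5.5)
Your proposal is correct and follows the paper's route: bound the size of each stage (terminal stubs, the line graph with at most quadratically many adjacency arcs, polynomial normalization, and $2(k+1)^2$ data bits), then read off $\alpha=\omega=k+1$ from the ordered sources and outputs that normalization produces. Your explicit check that neutralization leaves no stray sources or sinks fills in a point the paper leaves implicit, and your closing ETH paragraph belongs to the proof of the main theorem rather than to this lemma.
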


\begin{proof}
Terminal stubs, the directed line graph, and normalization have polynomial
size. The directed line graph has one vertex per edge of the source graph
and at most quadratically many adjacency edges; private subdivisions and
output tails add only a polynomial blow-up. The data contain $k+1$ vectors
of length $k+1$, and all numerical values lie in the fixed bit domain
$\Z_2$.
\end{proof}

\begin{proof}[Proof of \Cref{thm:main}]
Combine \Cref{thm:edp-hard} with \Cref{lem:reduction-correct,lem:parameter}. A target
algorithm running in $f(\alpha+\omega)|I|^{o(\alpha+\omega)}$ would solve the
source problem in $f'(k)n^{o(k)}$, since the parameter changes linearly and
the instance size grows polynomially, contradicting \Cref{thm:edp-hard}
under \ETH.
\end{proof}

\begin{corollary}[Bias-free hardness]
\label{cor:bias-free-hardness}
Bias-free $2$-\QNNT is \Wone-hard parameterized by $\alpha+\omega$ and has
the same \ETH lower bound under the restrictions of \Cref{thm:main}.
\end{corollary}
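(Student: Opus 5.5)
The plan is to reuse the reduction of \cref{sec:hardness} unchanged and check that each step survives when every non-source bias is fixed to zero. The architecture $G$, the data $\calD_k$, $\ell=0$ and the parameter $\alpha+\omega=2k+2$ all stay the same. Only the feasible set of parameter assignments shrinks, so it suffices to show that \Cref{thm:oneflip} still holds for bias-free networks. \Cref{lem:reduction-correct,lem:parameter} and the \ETH argument in the proof of \Cref{thm:main} then carry over verbatim.

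For the forward direction of \Cref{thm:oneflip}, the network built from vertex-disjoint paths already uses bias zero at every neuron. Each neuron has unit weights on path edges and zero weights elsewhere. This network is therefore a legal bias-free network, and it has zero error on $\calD_k$.

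For the backward direction, note that a bias-free zero-error network is in particular a zero-error network in the unrestricted model, with $b\equiv 0\in\Z_2$. The extraction argument of \Cref{thm:oneflip} was carried out for an arbitrary binary bias assignment, so it applies directly. It uses only \cref{eq:monotone}, the change sets $C_i$, and predecessor tracing, and these yield pairwise vertex-disjoint paths $s_i\leadsto t_i$.

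Combining both directions with \Cref{lem:linegraph,lem:normalization} gives the same correctness equivalence for the bias-free variant. Composing with \Cref{thm:edp-hard} gives \Wone-hardness and the $f(\alpha+\omega)|I|^{o(\alpha+\omega)}$ lower bound under \ETH. I expect no real obstacle. The one point to state carefully is that the hardness instance must be a legal bias-free instance, which it is, since the reduction never produces biases and the equivalence is preserved under restricting the parameter space.
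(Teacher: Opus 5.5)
Your proposal is correct and follows the paper's proof: the forward witness of \Cref{thm:oneflip} already has all biases zero, and a bias-free zero-error network is a special case of an unrestricted one, so the same reduction and parameter map give \Wone-hardness and the \ETH lower bound. Noting that the monotonicity step only needs $-b(v)\leq 0$, which holds for $b\equiv 0$, is a fine extra check, but the argument is the paper's.
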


\begin{proof}
Every \yes-witness constructed in the forward direction of
\Cref{thm:oneflip} has all biases zero. A \no-instance has no exact-fitting
network even when arbitrary binary biases are allowed, and therefore none
when biases are fixed to zero. The same reduction and parameter map apply.
\end{proof}

\section{Local profiles and global vertex capacity}
\label{sec:implications}

The reduction separates the finite local behavior induced by
$\alpha+\omega$ from the global capacity constraints carried by the
architecture. By \cref{eq:monotone}, every neuron has one of $k+1$ profiles:
the all-zero profile or a single zero-to-one transition. These profiles do
not determine joint realizability. For the examples
\[
 (0,0)\mapsto(0,0),\quad
 (1,0)\mapsto(1,0),\quad
 (1,1)\mapsto(1,1),
\]
the profiles $(0,1,1)$ and $(0,0,1)$ can each be realized separately at one
hidden bottleneck, but not simultaneously, because a binary neuron can
change at only one transition. Joint realizability therefore depends on how
profile types are assigned to vertices of $G$.

Thus the bottleneck is not the number of behaviors a neuron can display on
the data.  It is the simultaneous placement of those behaviors in a shared
DAG, where two transitions that are individually realizable may require the
same architectural resource.  The one-flip equivalence makes this competition
explicit by identifying a transition with the vertices on its routed path.

Every reduced \yes-instance has a zero-bias witness in which each non-source
vertex on a selected path has exactly one incoming weight-one path edge,
while every other edge has weight zero. The subgraph of weight-one edges thus
has maximum indegree one. The remaining combinatorial choice is to select
pairwise vertex-disjoint routes in the prescribed architecture. Even when
$\alpha=\omega=1$, the number of necessary active hidden neurons can
be arbitrarily large: on a directed chain trained on $0\mapsto0$ and
$1\mapsto1$, every edge must have weight one, regardless of the chain length.

In the algorithm of \citet{ganian2026tractability}, a tree decomposition
bounds the number of architecture vertices simultaneously exposed at a
separator, allowing dynamic programming to coordinate their local network
states. In the reduced instances, this global coordination is precisely the
coordination of vertex capacities among several routed transitions.  A small
separator offers a bounded interface across which such choices can be
combined; small external dimension alone supplies no comparable interface in
the architecture.  This is consistent with the two sides of the complexity
map: external dimension limits local profiles, whereas treewidth limits how
many architectural states must be coordinated at once.

The same structural explanation also pinpoints where the binary restriction
enters.  With signed weights, an output may increase because a negatively
weighted predecessor decreases, so an upward transition need not have an
upward-transition predecessor.  The following example records this obstruction
explicitly.

\begin{proposition}[Signed weights break one-flip predecessor tracing]
\label{prop:signed-barrier}
For $d=3$, there is a zero-bias network and the prefix chain
$(0,0),(1,0),(1,1)$ in which an output flips from zero to one although no
predecessor flips from zero to one at that transition.
\end{proposition}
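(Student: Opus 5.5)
We will exhibit an explicit small counterexample for $d=3$, where $\Z_3=\{-1,0,1\}$ and $\relu_3(z)=\min\{\max\{z,0\},1\}$. The plan is to build a tiny layered architecture with two sources $x_1,x_2$, one hidden layer and one output. We then choose signed weights and zero biases so that, along the prefix chain $(0,0),(1,0),(1,1)$, the output is $0$ at $r=0$, stays $0$ at $r=1$, and rises to $1$ at $r=2$. The hidden neurons feeding the output must not flip from $0$ to $1$ at that second transition. The mechanism, as suggested in the preceding discussion, is that the output increases because a negatively weighted predecessor \emph{decreases}.

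Concretely, the architecture has hidden neurons $h_1,h_2$ and output $o$, with edges $x_1\to h_1$, $x_1\to h_2$, $x_2\to h_2$, $h_1\to o$, $h_2\to o$. We set $w(x_1h_1)=1$, $w(x_1h_2)=1$, $w(x_2h_2)=-1$, $w(h_1o)=1$, $w(h_2o)=-1$, and all biases zero. On $(0,0)$ we get $h_1=h_2=0$, so $o=0$. On $(1,0)$ we get $h_1=1$ and $h_2=\relu_3(1)=1$, so $o=\relu_3(1-1)=0$. On $(1,1)$ we get $h_1=1$ and $h_2=\relu_3(1-1)=0$, so $o=\relu_3(1)=1$.

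Thus $o$ flips from $0$ to $1$ at the transition $r=1\to2$. Its predecessors there behave as follows: $h_1$ stays at $1$, and $h_2$ goes $1\to0$. Neither flips from $0$ to $1$, which is exactly the claim.

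The remaining checks are routine. The architecture is valid: sources form layer $0$, the hidden neurons layer $1$, and the single sink layer $2$, with every arc between consecutive layers. All weights and biases lie in $\Z_3$. If the statement is read as also requiring the network to fit the data with input equal to target, then that cannot hold with a single output. I therefore read the proposition only as a statement about the trace of one output neuron, not as a zero-error instance. The only delicate point is arranging, through the clipping of $\relu_3$, that the decreasing neuron $h_2$ was previously $1$ and not larger. With $d=3$ activations are capped at $1$, so this is automatic.
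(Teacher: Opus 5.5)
Your proof is correct and is the paper's construction exactly: your $h_1=\relu_3(x_1)$, $h_2=\relu_3(x_1-x_2)$ and $o=\relu_3(h_1-h_2)$ with zero biases are its $c$, $v$ and $t$, with profiles $(0,1,1)$, $(0,1,0)$ and $(0,0,1)$. Your computations and layering check are right, and reading the statement as a claim about the activation trace rather than about zero-error fitting matches the paper.
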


\begin{proof}
Here $\Z_3=\{-1,0,1\}$. Use hidden neurons
\[
  c=\relu_3(s_1),
  \qquad
  v=\relu_3(s_1-s_2),
\]
and output $t=\relu_3(c-v)$, with all biases zero. Along the three inputs,
\[
  c=(0,1,1),\qquad v=(0,1,0),\qquad t=(0,0,1).
\]
At the second transition, $t$ flips up while $c$ is stable and $v$ flips
down. The negative edge from $v$ makes the output preactivation increase.
Thus the monotonicity and predecessor-tracing arguments used for $d=2$ do
not extend directly to signed quantization.
\end{proof}

\section{Conclusion}
\label{sec:conclusion}

$2$-\QNNT is \Wone-hard parameterized by $\alpha+\omega$, even with zero
error, identical input-target pairs on a coordinatewise prefix chain, and
zero non-source biases.  The proof identifies exact fitting on these instances
with vertex-disjoint routing: each output transition traces back to the unique
input that changes at the same step, and monotonicity makes the extracted
paths disjoint.

The result separates two kinds of structure in quantized training.  A small
external dimension can force a small family of local activation profiles, yet
the prescribed hidden architecture may still encode a hard global placement
problem for those profiles.  Understanding how this picture changes when
weights are signed is a natural next step.  Other open questions include
para-NP-hardness for constant $\alpha+\omega$ and \Wone-membership.

\section*{Generative AI use statement}
Generative AI tools were used for literature-search assistance, mathematical
brainstorming during the development of the work, and manuscript drafting and
editing. The authors independently checked all mathematical arguments and
citations and take responsibility for the final manuscript.

\bibliographystyle{plainnat}
\bibliography{references}

\appendix
\section{Architecture normalization}
\label{app:normalization}

This appendix proves \Cref{lem:normalization} and records the exact
preservation argument.

\subsection{Terminal-stub preprocessing}

For every terminal pair $(p_i,q_i)$ of the source instance, introduce fresh
vertices $p_i^-$ and $q_i^+$ and private edges
\[
  p_i^-\to p_i,
  \qquad
  q_i\to q_i^+.
\]
Replace $(p_i,q_i)$ by $(p_i^-,q_i^+)$. The graph remains a DAG, the number
of pairs is unchanged, and the new terminals are pairwise distinct sources
and sinks. Every original path extends through its two private terminal
edges. Conversely, deleting those edges recovers an original path; if
$p_i=q_i$, the remainder may have length zero, as allowed. Since the new
edges are private, edge-disjointness is preserved. We henceforth rename the
new terminal pairs as $(p_i,q_i)$.

\subsection{Eliminating undesignated sources and sinks}

Let $K$ satisfy the hypotheses of \Cref{lem:normalization}: its designated
$s_1,\ldots,s_k$ are pairwise distinct sources and its designated
$\widehat t_1,\ldots,\widehat t_k$ are pairwise distinct sinks.

Introduce a fresh dummy source $s_0$. For every source $u$ of $K$ not among
$s_1,\ldots,s_k$, add $s_0\to u$. Introduce a fresh dummy sink
$\widehat t_0$. For every sink $v$ not among
$\widehat t_1,\ldots,\widehat t_k$, add $v\to\widehat t_0$, and add
$s_0\to\widehat t_0$.

The resulting DAG $K^+$ has exactly the sources $s_0,s_1,\ldots,s_k$ and
exactly the sinks $\widehat t_0,\widehat t_1,\ldots,\widehat t_k$. New edges
cannot create an $s_i$--$\widehat t_i$ path for $i\geq1$: edges leaving
$s_0$ are unreachable from a commodity source, and edges entering
$\widehat t_0$ terminate at the dummy sink. Commodity disjointness is
unchanged.

\subsection{Consecutive layers}

For each vertex $v$ of $K^+$, let $\lambda(v)$ be the maximum number of edges
on a directed path from any source to $v$. Every source has level zero and
every edge $uv$ satisfies $\lambda(v)\geq\lambda(u)+1$. If the gap
given by $g=\lambda(v)-\lambda(u)$ exceeds one, replace $uv$ by the private
path
\[
  u\to x^{uv}_1\to\cdots\to x^{uv}_{g-1}\to v,
\]
placing $x^{uv}_j$ at level $\lambda(u)+j$. All edges now join consecutive
levels.

Private subdivision preserves vertex-disjoint linkage. Each original path
expands uniquely; contraction maps every subdivided path back to $K^+$. If
two expanded paths share a subdivision vertex, they use the same original
edge and thus already share an endpoint in $K^+$.

\subsection{Common output layer}

Let $L$ be the largest level after subdivision. For every sink
$\widehat t_i$, including $i=0$, append a private directed tail of length
$L-\lambda(\widehat t_i)$ and let $t_i$ be its final vertex. The tails are
pairwise disjoint, and every sink of the resulting graph lies at level $L$.

The resulting graph $G$ is a valid QNNT architecture: its layer-zero vertices
are precisely its sources, each edge joins consecutive layers, and all sinks
share one layer. A path from $s_i$ to $t_i$ must enter the unique tail at
$\widehat t_i$. Contracting tails and subdivisions recovers an
$s_i$--$\widehat t_i$ path in $K$, while every such path expands uniquely.
Private components preserve vertex-disjointness. This proves
\Cref{lem:normalization}.

If $N=|V(K)|+|E(K)|$, the largest level is at most $|V(K)|-1$. The total
number of subdivision and tail vertices is $O(N^2+kN)$, so the construction
is polynomial-time.

\section{Parameter mapping and ETH lower bound}
\label{app:parameter-mapping}

\begin{table}[ht]
\centering
\caption{Exact quantities in the reduction.}
\label{tab:parameters}
\small
\begin{tabular}{@{}ll@{}}
\toprule
Quantity & Value or bound \\
\midrule
Quantization level & $d=2$ \\
Weight and bias domains & $\Z_2=\{0,1\}$ \\
Training-error threshold & $\ell=0$ \\
Input dimension & $\alpha=k+1$ \\
Output dimension & $\omega=k+1$ \\
Main parameter & $\alpha+\omega=2k+2$ \\
Number of examples & $|\calD_k|=k+1$ \\
Examples & $(\xi^{(r)},\xi^{(r)})$, bits only \\
Profiles per neuron & at most $k+1$ \\
Architecture size & polynomial in the source instance \\
Depth, width, treewidth & unbounded by the parameter \\
\bottomrule
\end{tabular}
\end{table}

\paragraph{Explicit encoding.}
The architecture and data set are represented explicitly. Because $d=2$ and
$\ell=0$, all numerical values in the instance have constant encoding
length.

\paragraph{Polynomial size.}
Let $n_H=|V(H)|+|E(H)|+k$. Terminal stubs add $O(k)$ vertices and edges. The
directed line graph has $O(n_H)$ vertices and at most $O(n_H^2)$ edges. The
normalization in Appendix~\ref{app:normalization} is polynomial. The input
and target lists each contain $(k+1)^2$ bits. Thus the complete QNNT instance
has size $n_H^{O(1)}$.

\paragraph{Parameter mapping.}
The map is polynomial-time and its output parameter is exactly $2k+2$.
Hence it is a parameterized many-one reduction from \DAGEDP parameterized by
$k$ to $2$-\QNNT parameterized by $\alpha+\omega$
\citep{cygan2015parameterized,downey2013fundamentals}.

\paragraph{ETH lower bound.}
Suppose the target problem were solvable in
\[
  f(\alpha+\omega)|I|^{g(\alpha+\omega)},
  \qquad g(r)=o(r).
\]
For the reduced instance, $\alpha+\omega=2k+2$ and $|I|\leq n_H^c$ for a
fixed constant $c$. The source running time becomes
\[
  f(2k+2)(n_H^c)^{g(2k+2)}=f'(k)n_H^{o(k)},
\]
contradicting \Cref{thm:edp-hard} under \ETH.

\paragraph{Unbounded structural quantities.}
Depth, width, neuron count, edge count, treewidth, and trainable-parameter
count are not bounded by functions of $\alpha+\omega$ in the reduction;
these quantities encode the routed source instance.

\end{document}